\algnewcommand\algorithmicreturn{\textbf{return}}
\algnewcommand\RETURN{\State \algorithmicreturn}%
\newtheorem{theorem}{Theorem}
\theoremstyle{definition}
\newtheorem{definition}{Definition}
\newcommand{\ie}{\emph{i.e.}}
\title{Evolutionary Stochastic Policy Distillation}
\author{%
  Hao Sun, Xinyu Pan, Bo Dai, Dahua Lin, Bolei Zhou\\
  The Chinese University of Hong Kong 
}
\begin{document}
\maketitle

\begin{abstract}
Solving the Goal-Conditioned Reward Sparse (GCRS) task is a challenging reinforcement learning problem due to the sparsity of reward signals. In this work, we propose a new formulation of GCRS tasks from the perspective of the drifted random walk on the state space. Based on the formulation we design a novel method called Evolutionary Stochastic Policy Distillation (ESPD) to solve GCRS tasks by reducing the First Hitting Time of the stochastic process. As a self-imitate approach, ESPD learns a target policy efficiently from a series of its stochastic variants in a supervised learning manner. ESPD follows an Evolution Strategy that applies perturbations upon the action of the target policy as the behavior policy for generating experiences and then select the experiences of the superior stochastic variants. Finally the target policy is updated by policy distillation on the selected experience transitions. The experiments on the MuJoCo robotics control suite show the high learning efficiency of the proposed method.\footnote{Code is available at \url{https://github.com/decisionforce/ESPD}}
 
%
\end{abstract}

\section{Introduction} 
Although Reinforcement Learning (RL) has been applied to various challenging tasks and outperforms human in most cases~\citep{mnih2015human,silver2016mastering,lillicrap2015continuous,vinyals2019grandmaster,pachocki2018openai}, manual effort is still needed to provide sufficient learning signals beside the original \textit{Win or Loss} reward signal~\citep{vinyals2019grandmaster}. In most real-world RL tasks, the rewards are usually extremely sparse.
On the one hand, such reward sparsity hinders the learning process, on the other hand, it also provides the flexibility of learning different policies as different solutions to a certain task in order to get rid of deceptive sub-optimal solutions produced by manually designed rewards~\citep{plappert2018multi}. 

Goal-Conditioned Reward Sparse (GCRS) task is one of the challenging real-world reinforcement learning tasks with extremely sparse rewards. In the task, the goal is combined with the current state as the policy input, while the agent is able to receive a positive reward only when the goal is achieved. In many cases, the GCRS task is also considered as the Multi-Goal task where the goal is not fixed and can be anywhere in the state space. 
Therefore the policy has to learn a general solution that can be applied to a set of similar tasks. 
For example, robotic object grasping is such a GCRS task: the target object could be anywhere on the table, the robot has to adjust its arm to reach the object and grasp it. 
The learning objective of a policy is to find a feasible path from the current state to the goal~\citep{tamar2016value}. Similar tasks include the Multi-Goal benchmarks in robotics control~\citep{plappert2018multi}.

In previous works, reward shaping \citep{ng1999policy}, hierarchical reinforcement
learning~\citep{dietterich2000hierarchical,barto2003recent}, curriculum
learning~\citep{bengio2009curriculum} and learning from demonstrations~\citep{schaal1997learning,atkeson1997robot,argall2009survey,hester2018deep,nair2018overcoming}
were proposed to tackle the challenges of learning through sparse rewards. These approaches provide manual guidance from different perspectives. Besides, the Hindsight Experience Replay (HER)~\citep{kaelbling1993learning,NIPS2017_7090} was proposed
to relabel failed trajectories and assign hindsight credits as complementary to the primal sparse
rewards, which is still a kind of Temporal Difference learning and relies on the value of reward.
Recently the Policy Continuation with Hindsight Inverse Dynamics (PCHID)~\citep{sun2019policy} is proposed
to learn with hindsight experiences in a supervised learning manner, but the learning efficiency is still limited by the explicit curriculum setting.

In this work, we intend to further improve the learning efficiency and stability for these GCRS tasks with an alternative approach based on supervised learning. 
Specifically, by formulating the exploration in GCRS tasks as a random walk in the state space, solving the GCRS task is then equivalent to decreasing the first hitting time (FHT) in the random walk. 
The main idea of our method is encouraging the policy to reproduce trajectories that have shorter FHTs. With such a self-imitated manner, the policy learns to reach more and more \emph{hindsight goals}~\citep{NIPS2017_7090} and becomes more and more powerful to extrapolate its skills to solve the task.
Based on this formulation, we propose a new method for the GCRS tasks, which conforms a self-imitate
learning approach and is independent of the value of rewards. Our agent learns from its own success or
hindsight success, and extrapolates its knowledge to other situations, enabling the learning
process to be executed in a much more efficient supervised learning manner. 
To sum up our contributions:
\begin{enumerate}
    \item By modeling the GCRS tasks as random walks in the state space, we provide a novel Stochastic Differential Equation (SDE) formulation of policy learning and show the connection between policy improvement and the reduction of FHT.
    \item To reduce the FHT from the SDE perspective, we propose Evolutionary Stochastic Policy Distillation (ESPD), which combines the mechanism of Evolution Strategy and Policy Distillation, as a self-imitated approach for the GCRS tasks.
    \item We demonstrate the proposed method on the MuJoCo robotics control benchmark and show our method can work in isolation to solve GCRS tasks with a prominent learning efficiency.
\end{enumerate}


\section{Preliminaries}
\begin{figure*}[t]
\begin{minipage}[htbp]{1.0\linewidth}
			\centering
			\includegraphics[width=1.0\linewidth]{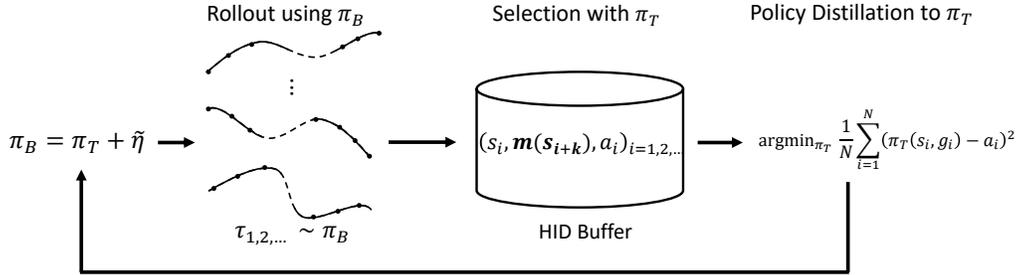}
		\end{minipage}%
\caption{Illustration of Evolutionary Stochastic Policy Distillation. The behavior policy $\pi_B$ is composed of a deterministic policy $\pi_T$ and a stochastic term $\tilde{\eta}$ for exploration. We first generate a batch of trajectories with $\pi_B$, and then we use a SELECT function to select the transitions finished by $\pi_B$ in shorter FHT than $\pi_T$ and store the corresponding HIDs in a buffer. Finally, we improve $\pi_T$ with supervised learning and then use the updated policy to generate new samples.}
\label{fig0}
\end{figure*}
\subsection{Markov Decision Process}
We consider a Markov Decision Process (MDP) denoted by a tuple $\mathcal{M}$ containing:
a state space $\mathcal{S}$, an action space $\mathcal{A}$,
a start state distribution $d(s_0)$, a transition distribution $\mathcal{P}(s_{t + 1}| s_t, a_t)$,
a reward function $r(s)$ and a discount factor $\gamma \in [0, 1]$.
Let $T:\mathcal{S}\times\mathcal{A}\to\mathcal{S}$ model the dynamics 
if the transition is deterministic. Given a policy $\pi(a|s)$, let
$J(\pi) = \mathbb{E}_{\mathcal{M}}[\sum_{t=0}^\infty \gamma^t r(s_t)]$ denote the discounted expected
return, and an optimal policy $\pi^* = \arg \max_\pi J(\pi)$ maximizes that return.

\subsection{Universal Value Function Approximator and Multi-Goal RL}
The Universal Value Function Approximator~\citep{pmlr-v37-schaul15} extends the state space
of Deep Q-Networks~\citep{mnih2015human} to include the goal state as part of
the input, which is useful in the setting where there are multiple goals to achieve. 
Moreover, ~\citet{pmlr-v37-schaul15} show that in such a setting, the learned policy
can be generalized to previous unseen state-goal pairs. Specifically, let $\mathcal{Q} = \mathcal{S} \times \mathcal{G}$
denote the \textit{extended state space} of $\mathcal{M}$ where $\mathcal{G}$ is a goal space.
Since the goal is fixed within an episode, the transition function $T': \mathcal{Q} \times \mathcal{A} \to \mathcal{Q}$ 
on the extended state space can be induced from the original transition $T$ as
$T'((s, g), a) = (T(s, a), g).$
Besides, a representation mapping $m(\cdot): \mathcal{S}\to\mathcal{G}$ is assumed to be known in such 
multi-goal RL frameworks~\citep{plappert2018multi}. Hence, in order to achieve the goal $g$, the agent
must reach a certain state $s_g$ such that $g = m(s_g)$.

We say $\mathcal{M'}$ is a sub-task of $\mathcal{M}$ if $\mathcal{M}'$ is obtained by restricting the start state
distribution $d'(q)$ onto a subset $\mathcal{Q}'$ of the extended state space $\mathcal{Q}$, denoted by 
$\mathcal{M'} = \mathcal{M}_{\mathcal{Q}'} \subset \mathcal{M}$.
In particular, let $\mathcal{M}_{(s, g)}$ denote the sub-task with fixed start state $s$ and goal state $g$.
A partition of $\mathcal{M}$ is a sequence of subtasks $\{\mathcal{M}_k\}_{k=1}^K$ such that
$\mathcal{M}_k \subset \mathcal{M}_{k + 1}$ and $\mathcal{M}_K = \mathcal{M}$.
\subsection{Policy Continuation}
Most multi-goal RL tasks have sparse rewards. In order to motivate the agent to reach the goal efficiently,
the reward function $r(s, g)$ is usually set non-negative if $g = m(s)$ while there is a \emph{negative} penalty  
otherwise. Such a reward distribution can exhibit optimal substructure of the policy.
\begin{definition}{\bfseries Policy Continuation}
Given a policy $\pi$ defined on the sub-task $\mathcal{M}_\mathcal{Q'}$ and a policy $\Pi$ defined on the
sub-task $\mathcal{M}_\mathcal{Q''} \supset \mathcal{M}_\mathcal{Q'}$, we call $\Pi$ is a policy continuation
of $\pi$, if $\Pi(q) = \pi(q),\ \forall q \in \mathcal{Q}'_\pi$,
where $\mathcal{Q}'_\pi$ is the set of all extended states reachable by $\pi$ within task $\mathcal{M}_{\mathcal{Q}'}$

\end{definition}
\begin{theorem}
If $\pi$ is an optimal policy of sub-task $\mathcal{M}_\mathcal{Q'}$,
then there exists an optimal policy $\Pi$ of $\mathcal{M}_\mathcal{Q''} \supset \mathcal{M}_\mathcal{Q'}$
such that $\Pi$ is a policy continuation of $\pi$.
\end{theorem}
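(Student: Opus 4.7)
The plan is to construct $\Pi$ explicitly by taking $\pi$ on the set $\mathcal{Q}'_\pi$ of states reachable by $\pi$ and filling in $Q^{\ast}$-greedy actions elsewhere, then verify that the pasted policy is optimal on the enlarged sub-task. Since $\mathcal{M}_{\mathcal{Q}'}$ and $\mathcal{M}_{\mathcal{Q}''}$ share the same transition kernel $\mathcal{P}$ and reward $r$ and differ only in their start-state distributions, they admit a common optimal action-value function $Q^{\ast}$ on the extended state space $\mathcal{Q}$. This is the first observation I would record, since $Q^{\ast}(q,a)$ is defined as the supremum over policies of the expected discounted return \emph{starting from} $(q,a)$, a quantity that does not depend on the start-state distribution of the sub-task.

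The second step is to argue that $\pi$ must be $Q^{\ast}$-greedy at every $q \in \mathcal{Q}'_\pi$. By definition, each such $q$ sits in the support of the discounted state-visitation measure induced by $\pi$ in $\mathcal{M}_{\mathcal{Q}'}$; if the support of $\pi(\cdot\mid q)$ contained an action $a$ with $Q^{\ast}(q,a) < \max_{a'} Q^{\ast}(q,a')$, then a one-step policy improvement at $q$ would strictly increase $J_{\mathcal{M}_{\mathcal{Q}'}}(\pi)$, contradicting optimality of $\pi$ on the sub-task. I then define
\[
\Pi(\cdot\mid q) = \begin{cases} \pi(\cdot\mid q), & q \in \mathcal{Q}'_\pi, \\ \pi^{\ast}_{0}(\cdot\mid q), & q \notin \mathcal{Q}'_\pi, \end{cases}
\]
where $\pi^{\ast}_{0}$ is any fixed policy that is $Q^{\ast}$-greedy everywhere. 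By construction, $\Pi$ is $Q^{\ast}$-greedy at every state, so the Bellman optimality equation yields $V^{\Pi} \equiv V^{\ast}$, showing that $\Pi$ is optimal for $\mathcal{M}_{\mathcal{Q}''}$. Since $\Pi$ agrees with $\pi$ on $\mathcal{Q}'_\pi$ by construction, $\Pi$ is a policy continuation of $\pi$ in the sense of the preceding definition.

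The hard part will be the second step: rigorously deducing pointwise $Q^{\ast}$-greediness of $\pi$ on $\mathcal{Q}'_\pi$ from the single scalar statement that $\pi$ maximizes $J$ on $\mathcal{M}_{\mathcal{Q}'}$. The clean way to do this is via the discounted state-visitation measure under $\pi$: because every $q \in \mathcal{Q}'_\pi$ lies in its support, a local perturbation at $q$ that strictly raises $Q^{\pi}(q,\cdot)$ also strictly lifts $J_{\mathcal{M}_{\mathcal{Q}'}}(\pi)$, so no such perturbation can exist. In a general measurable setting, one must additionally be careful with null sets and invoke a measurable selection theorem to produce $\pi^{\ast}_{0}$, but in the discrete or sufficiently regular continuous settings relevant to the paper, these technicalities are routine.
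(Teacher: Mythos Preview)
Your construction is exactly the paper's: splice $\pi$ on $\mathcal{Q}'_\pi$ with an optimal policy of $\mathcal{M}_{\mathcal{Q}''}$ elsewhere (the paper writes $\Pi'$ where you write $\pi_0^\ast$, but any optimal policy is $Q^\ast$-greedy, so these coincide). The paper's proof simply asserts that the spliced policy is optimal as ``straightforward to see'', whereas you supply the missing justification---that optimality of $\pi$ on $\mathcal{M}_{\mathcal{Q}'}$ forces $\pi$ to be $Q^\ast$-greedy on $\mathcal{Q}'_\pi$, so the splice is globally $Q^\ast$-greedy---which is the substantive content the paper elides.
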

\begin{proof}
Let $\Pi'$ be an optimal policy of $\mathcal{M}_\mathcal{Q''}$ and construct $\Pi$ as follows:
\begin{align}
\Pi(q) = \begin{cases}
\pi(q) & \mathrm{if\ } q \in \mathcal{Q}'_\pi\\
\Pi'(q) & \mathrm{otherwise.}
\end{cases}
\end{align}
It is straightforward to see that $\Pi$ is the optimal policy continuation of $\pi$.
\end{proof}
The above theorem is simple yet powerful. It enables the agent to perform supervised learning
from experience replay as long as the trajectory is optimal for some sub-tasks.
Nevertheless, in general given a trajectory $\{(s_0, g), a_0, r_0, \cdots, (s_T, g), a_T, r_T\}$
generated by the policy $\pi$, it is not easy to decide whether $\pi$ is optimal for
the sub-task $\mathcal{M}_{(s_0, m(s_T))}$. If we further assume the negative reward is a \emph{constant} value,
\ie the agent should learn to achieve the goal with minimum actions,
\citet{sun2019policy} proposed to use the partition induced from the $k$-step solvability
to help decide the sub-task optimality.
\begin{definition}{\bfseries $k$-Step Solvability}
Given a sub-task $\mathcal{M}_{(s, g)}$ of a certain system with deterministic dynamics,
we call $\mathcal{M}_{(s, g)}$ is $k$-step solvable with a policy $\pi$ if the goal $g$ can be achieved
from $(s, g)$ within $k$ steps under $\pi$, \ie, set $q_0 = (s, g)$ and
$q_{t + 1} = T'(q_t, \pi(q_t)) = (s_{t + 1}, g)$ for $t \geq 0$, $\exists i \leq k$ such that $m(s_{i}) = g$.
We call a sub-task $\mathcal{M'}$ is $k$-step solvable with $\pi$ if any
$\mathcal{M}_{(s, g)} \subset \mathcal{M}'$ is $k$-step solvable with $\pi$. Specifically, if $\pi$ is the
optimal policy of $\mathcal{M'}$, we simply call $\mathcal{M'}$ is $k$-step solvable.
\end{definition}
Consider the partition $\{\mathcal{M}_k\}_{k=1}^K$ of $\mathcal{M}$ where for any $k < K$, $\mathcal{M}_k$
is the maximal $k$-solvable sub-task. Suppose the agent has learnt the optimal policy $\pi_k$ of $\mathcal{M}_k$,
it can decide whether a trajectory of length $(k + 1)$ is optimal by testing whether the corresponding
sub-task is $k$-step solvable. And those trajectories passing the TEST can serve as supervised training samples
for extending $\pi_k$ to an optimal policy continuation $\pi_{k + 1}$ on $\mathcal{M}_{k + 1}$.

\section{Method}
\subsection{Problem Formulation}

In a given goal-oriented reinforcement learning task, we assume there exists an unknown
metric $\mathcal{D}(s_t, g)$ that represents the distance between the current state $s_t$
and the desired goal state $g$. For example $\mathcal{D}(s,g)$ is the Euclidean distance in
barrier-free navigation tasks; or the Manhattan distance in navigation tasks with obstacles.

A feasible solution of the task should be a policy $\pi(s_t,g)$ that outputs an action
$a_t$, such that the distance $\mathcal{D}(s_{t+1},g)\le\mathcal{D}(s_t,g)$ for deterministic
dynamics, or $\mathbb{E}[\mathcal{D}(s_{t+1},g)]\le\mathbb{E}[\mathcal{D}(s_t,g)]$
for stochastic dynamics. 
We assume $\mathcal{D}(s,g)$ is continuous and differentiable on $s$, and
$\mathrm{d}s = -\xi\frac{\partial \mathcal{D}(s,g)}{\partial s} \mathrm{d}t$ is a feasible move, as it decreases the distance between $s$ and $g$ when $\xi$ is sufficiently small. 
We further assume the state is a vector, the state transition $\Delta s_t = s_{t+1} - s_t$ is determined by both the dynamics $\phi_s(\cdot): \mathcal{S}\times\mathcal{A} \rightarrow \Delta \mathcal{S}$ and the action
$a_t = \pi(s_t,g)$ provided by the policy. We may write a sufficient condition for
a feasible policy:
\begin{equation}
    \phi_s(\pi(s,g)) = -\xi\frac{\partial \mathcal{D}(s,g)}{\partial s} \mathrm{d}t,
\end{equation}
we further assume $\phi_s^{-1}(\cdot)$ exists\footnote[2]{The assumption can be released to a existence of pseudo inverse: $\phi_s^{-1}(\phi_s(a)) \in A'$, where $A'$ is a set s.t. $\forall a'\in A'$, $\phi_s(a') = \phi_s(a)$.}, \ie~$\forall a \in \mathcal{A}$,
$\phi_s^{-1}(\phi_s(a)) = a$. Hence, by parameterizing the policy
$\pi(s,g)$ with $\theta$, we have
\begin{equation}
   \pi_\theta(s,g) =  \phi_s^{-1}\left(-\xi\frac{\partial \mathcal{D}(s,g)}{\partial s}\mathrm{d}t\right),
\end{equation}
is a feasible policy, \ie, it tends to solve the GCRS task as it continuously minimizes the distance between the current state and the goal.
The above equation tells us, in order to learn a well-performing policy, the policy should
learn two unknown functions: the inverse dynamics $\phi_s^{-1}(\cdot)$ and the derivatives
of distance metric $\mathcal{D}(\cdot,\cdot)$ over $s$ and $g$ with regard to the state $s$.
The work of~\citet{sun2019policy} proposed PCHID to use Hindsight Inverse Dynamics (HID)
as a practical policy learning method in such GCRS tasks. Specifically, in Inverse Dynamics,
a model parameterized by $\theta$ is optimized by minimizing the mean square error of predicted
action $\hat{a}_t$ and executed action $a_t$ between adjacent states $s_t$ and $s_{t+1}$, \ie
~$\theta = \mathop{\arg\min}_{\theta} (a_t - \hat{a}_t)^2 = \mathop{\arg\min}_{\theta} (a_t - \pi_{\theta}(s_t,s_{t+1}))^2$. The
HID revises the latter $s_{t+1}$ with its goal correspondence $g_{t+1} = m(s_{t+1})$, where the
mapping $m$ is assumed to be known in normal GCRS task settings. $m: {\mathcal{S}}\to \mathcal{G}$
s.t. $\forall s\in{\mathcal{S}}$ the reward function $r(s,m(s)) = 1$. In the single step transition setting, the learning objective of the policy is to learn HID by
\begin{equation}
\label{eq_hid}
    \theta = \mathop{\arg\min}_{\theta} (a_t - \hat{a}_t)^2 = \mathop{\arg\min}_{\theta} (a_t - \pi_{\theta}(s_t,g_{t+1}))^2.
\end{equation}

Eq.\ref{eq_hid} shows the HID can be used to train a policy with supervised learning by minimizing the prediction error. However, to get more capable policy that is able to solve harder cases,
training the policy only with 1-step HID is not enough. The work of PCHID then proposed to check the
optimality of multi-step transitions with a TEST function and learn multi-step HID recursively. Such an explicit curriculum learning strategy is not efficient as multi-step transitions can only be collected after the
convergence of previous sub-policies. 

Here we interpret how PCHID works from the SDE perspective. 
Practically, a policy is always parameterized as a neural network trained from scratch
to solve a given task. At beginning the policy will not be fully goal-oriented as a
feasible policy. With random initialization, the policy network will just perform
random actions regardless of what state and goal are taken as inputs. We use a coefficient
$\epsilon$ to model the goal awareness of the policy, e.g., $\epsilon=0$ denotes a purely random policy, and $\epsilon=1$ denotes a better policy. In order to collect diverse experiences and improve our target policy, we follow traditional RL approaches to assume a random noise term denoted by $N$
with coefficient $\sigma$ to execute exploration. Hence, the behavioral policy becomes:
\begin{equation}
\label{equation_4}
    \pi_{\mathrm{behave}} =\pi_{\theta}(s,g) + \sigma N 
    = \epsilon \phi_s^{-1}\left(-\xi\frac{\partial \mathcal{D}(s,g)}{\partial s}\mathrm{d}t\right) + \sigma N.
\end{equation}

The behavioral policy above combines a deterministic term and a stochastic term, which in
practice can be implemented as a Gaussian noise or OU-noise~\citep{lillicrap2015continuous}. Although we assume a deterministic
policy $\pi_{\theta}(s,g)$ here, the extension to stochastic policies is straightforward, e.g., the network can predict the mean value and the standard deviation of an action to form a Gaussian
policy family and the Mixture Density Networks~\citep{bishop1994mixture} can be used for more powerful policy representations.

With such a formulation, the PCHID can be regarded as a method that \emph{explicitly}
learns the inverse dynamics with HID, and progressively learns the metric $\mathcal{D}(s,g)$
with Policy Continuation (PC). In this work, we justify that the approach can be extended to a more
efficient synchronous setting that \emph{implicitly} learns the inverse dynamics $\phi_s^{-1}(\cdot)$
and the derivatives of distance metric $\mathcal{D}(\cdot,\cdot)$ with regard to state $s$ at the same time. 
The key insight of our proposed method is  \emph{minimizing the First Hitting Time~\citep{alili2005representations} of a drifted random walk} (Eq.\ref{equation_4}).

Concretely, the simplest case of Eq.\ref{equation_4} is navigating in the Euclidean space, 
where the distance metric is $\mathcal{D}(s,g) = ||g-s||_2$ and the transition dynamics is an identical
mapping, \ie~$\phi_s(a) = a \in \mathcal{A}\subset \mathcal{S} $, and by applying a Gaussian
noise on the action space, we have
\begin{equation}
    \pi(s,g) = ds = \epsilon\frac{g-s}{||g-s||_2} \mathrm{d}t + \sigma \mathrm{d}W_t,
    \label{equation_norm_ou}
\end{equation}
which is a Stochastic Differential Equation. As our learning objective is to
increase the possibility of reaching the goal in a finite time horizon, the problem can
be formulated as minimizing the FHT 
$\tau = \inf\{{t>0: s_t = g|s_0 > g}\}$, \ie~hitting
the goal in the state space. In practice, the goal state is always a region in the
state space~\citep{plappert2018multi}, and therefore the task is to cross the region as
soon as possible.



\subsection{Evolutionary Stochastic Policy Distillation} 
\begin{figure*}[t]
\begin{minipage}[htbp]{1.0\linewidth}
			\centering
			\includegraphics[width=1.0\linewidth]{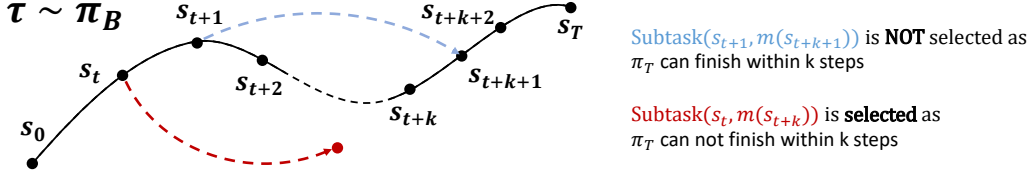}
		\end{minipage}%
\caption{Illustration of the selection process: first we generate episodes with the stochastic behavior policy $\pi_B$, which is composed of the deterministic target policy $\pi_T$ and a noise term drawn from Gaussian, then we check the superiority of generated transitions over the target policy. If $\pi_T$ can not find a shorter path for a transition generated by $\pi_B$, the select function will return True and the transition will be stored for Stochastic Policy Distillation. Therefore, $\pi_T$ will learn to evolve to solve more sub-task, \ie, transitions, continuously.}
\label{illu_fig}
\end{figure*}
Our proposed method combines evolutionary
strategy with policy distillation to minimize FHT. Specifically, ESPD maintains a target deterministic policy
$\pi_T$, parameterized as a policy network, and a behavioral stochastic policy $\pi_B$
\begin{equation}
    \pi_B = \pi_T + \tilde{\eta},  \tilde{\eta}\sim\mathcal{N}(0,\sigma^2),
\end{equation}
according to Eq.\ref{equation_4}, \ie~the behavior policy comes from adding a Gaussian exploration noise upon the target policy $\pi_T$, as in previous deterministic policy learning literature~\citep{lillicrap2015continuous,fujimoto2018addressing}. For the policy update step, ESPD use the evolutionary idea by distilling the well-performed behavior policies, in terms of FHT, to the target policy, instead of applying policy gradient or the zeroth-order approximation of policy gradient~\citep{salimans2017evolution} to the target policy.

Concretely, during training, $\pi_B$ first interacts with the environment and collects a batch of transition samples, permitting us to generate a batch of HIDs, regardless of their optimality. These HIDs contain a set of transition tuples $(s,g',a)$, where $g'$ denotes the hindsight goal. \ie, the starting point, final achieved goal, and the corresponding action are included in each of these transition tuples. From an oracle-perspective, these HIDs can be regarded as generated from a series of \emph{unknown deterministic policies} instead of a known stochastic policy $\pi_B$, each provides a individual solution for the state-goal pair task $(s,g')$. Among these unknown oracle-policies, some are better than our current target policy $\pi_T$ in terms of FHT, which means they are able to solve the certain state-goal pair task in fewer steps, or they are able to solve some sub-tasks while the $\pi_T$ is not. Although we are not able to access these well-performing oracle-policies directly, we can distill the useful knowledge from them to $\pi_T$ through their corresponding HIDs.

In practice, we use a SELECT function to distinguish those HIDs that outperform $\pi_T$ and store them in a buffer $\mathcal{B} = \{(s_i,g'_i,a_i)\}_{i=1,2,...},$. The SELECT function can be implemented in different ways, (1) reset the environment to a given previous state, which is always tractable in simulation~\citep{nair2018overcoming}, (2) use classifiers, dynamic models or heuristics~\citep{sun2019policy}. In this work we adopt (1) and leave the usage of model-based SELECT functions to the future work. 
To implement (1), the SELECT function takes in an episode generated by $\pi_B$. Suppose the episode $(s_t,a_t,s_{t+1},a_{t+1},...,s_{t+k})$ is of length $k$, the SELECT function resets environment to the starting state of this episode $s_t$ and runs $\pi_T$ for up to $k$ steps, trying to reach the final achieved state $s_{t+k}$. \ie, at every step, an action of $\pi_T(s, m(s_{t+k}))$ is performed. If $\pi_T$ is \textbf{NOT} able to reach $s_{t+k}$ within $k$ steps, the corresponding transition tuple $(s_t,m(s_{t+k}),a_t)$ will be collected in the buffer $\mathcal{B}$ and $\pi_T$ will learn from these tuples later. Such a procedure is illustrated in Fig.\ref{illu_fig}.

Then, we can apply Stochastic Policy Distillation (SPD) to distill the knowledge from the well-performing oracle-policies to $\pi_T$ so that $\pi_T$ may evolve to be more capable to
tackle the same sub-tasks. To be specific, we use supervised learning to minimize the difference between the action stored in the HID buffer and the action $\pi_T$ predicted. The SPD is conducted as
\begin{equation}
\label{eq_sl}
    \pi_T = \mathop{\arg\min}_{\pi_T} \frac{1}{N}\sum_{i=1}^N(\pi_T(s_i,g'_i) - a_i)^2,
\end{equation}
where $(s_i, g'_i, a_i)$ are sampled from the HID buffer $\mathcal{B}$.
From this point of view, the ESPD method
is composed of evolution strategy and policy distillation, where a stochastic behavior policy $\pi_B$ acts as the perturbation on the action space and produces diverse strategies (\emph{a population}), and we choose those well-performed strategies to distill their knowledge
into $\pi_T$ (a \emph{selection}). Fig.\ref{fig0} provides an illustration of the learning pipeline and Algorithm \ref{Algorithm1} presents the detailed learning procedure of ESPD.

\begin{algorithm}[tbp]
\caption{ESPD}
\label{Algorithm1}
\begin{algorithmic}
	\STATE \textbf{Require}
	\begin{itemize}
		\item a target policy $\pi_T(s,g)$ parameterized by neural network: $\pi_T(s,g) = \pi_{\theta}(s,g)$
		\item a reward function $r(s,g) = 1$ if $g = m(s)$ else $0$
		\item a buffer for ESPD $\mathcal{B}$
		\item a Horizon list $\mathcal{K} = [1,2,...,K]$
		\item a noise e.g.,$\mathcal{N}(0,\sigma^2)$
	\end{itemize}
	\STATE Initialize  $\pi_T$, $\pi_B = \pi_T + \tilde{\eta},  \tilde{\eta}\sim\mathcal{N}(0,\sigma^2)$, $\mathcal{B}$
	\FOR{episode $= 1,M$}
	    \STATE Generate $s_0$, $g$ by the environment
	    \FOR{$t = 0,T-1$}
	        \STATE Select an action by the behavior policy $a_t = \pi_B(s_t,g)$    
			\STATE Execute the action $a_t$ and get the next state $s_{t+1}$
		\ENDFOR
		\FOR{$t = 0, T-1$}
			\FOR{$k = 1, K$}
			    \STATE Calculate additional goal according to $s_{t+k}$ by $g' = m(s_{t+k})$
			    \IF{ SELECT($s_t,g'$) = True}
			        \STATE Store $(s_t,g',a_t)$ in $\mathcal{B}$
			    \ENDIF  
		    \ENDFOR
		\ENDFOR
		\STATE Sample a minibatch $B$ from buffer $\mathcal{B}$
		\STATE Optimize target policy $\pi_T(s_t,g')$ to predict $a_t$ according to Eq.\ref{eq_sl}
		\STATE Update behavior policy $\pi_B = \pi_T + \tilde{\eta},  \tilde{\eta}\sim\mathcal{N}(0,\sigma^2)$
	\ENDFOR
\end{algorithmic}
\end{algorithm}




\section{Experiments}
\begin{figure}[t]
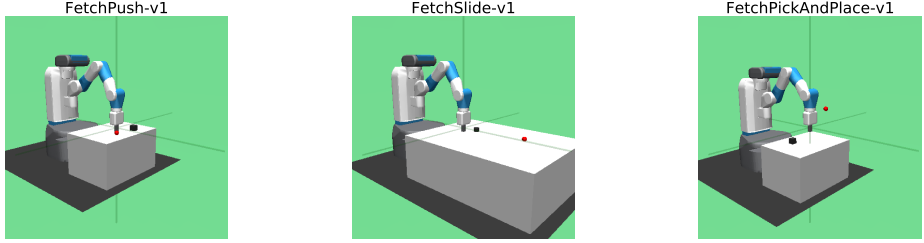

\begin{minipage}[htbp]{0.33\linewidth}
			\centering
			\includegraphics[width=0.85\linewidth]{figures/FIG_PUSH_ENV.pdf}
		\end{minipage}%
		\begin{minipage}[htbp]{0.33\linewidth}
			\centering
			\includegraphics[width=0.85\linewidth]{figures/FIG_Slide_ENV.pdf}
		\end{minipage}%
		\begin{minipage}[htbp]{0.33\linewidth}
			\centering
			\includegraphics[width=0.85\linewidth]{figures/FIG_PAP_ENV.pdf}
		\end{minipage}%
\caption{Three robotic manipulation environments: FetchPush, FetchSlide and FetchPickAndPlace.}
\label{fig_env}
\end{figure}
\subsection{Result on the Fetch Benchmarks}
We demonstrate the proposed method on the Fetch Benchmarks. Specifically,
we evaluate our method on the FetchPush, FetchSlide and
FetchPickAndPlace environments, as shown in Fig.\ref{fig_env}. We compare our proposed method
with the HER~\citep{NIPS2017_7090,plappert2018multi} released in OpenAI
Baselines~\citep{baselines} and the Evolution Strategy~\citep{salimans2017evolution} which is a counterpart of our method with parameter noise. As PCHID~\citep{sun2019policy} can be regarded as a special case of ESPD if we gradually increase the hyper-parameter Horizon in ESPD from $1$ to $K$, the performance of PCHID is upper-bounded by ESPD and we do not include it as a baseline. Such result can be inferred from our ablation study on the Horizon $K$ in the next section, which shows smaller $K$ limits the performance, and achieves worse learning efficiency than $K=8$, the default hyper-parameter used in ESPD\footnote{In FetchSlide, our ablation studies in the next section shows $K=12$ outperforms $K=8$.}.

Fig.\ref{main_results} shows the comparison of different approaches. For each environment, we conduct 5 experiments with different random seeds and plot the averaged learning curve. 
Our method shows superior learning efficiency and can learn to solve the task in fewer episodes in all the three environments.

\begin{figure*}[t]
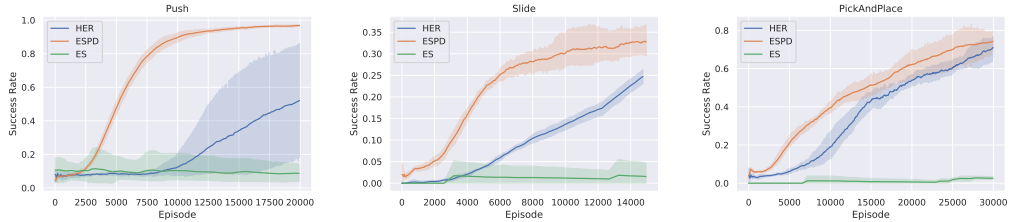

\begin{minipage}[htbp]{0.33\linewidth}
			\centering
			\includegraphics[width=1.0\linewidth]{figures/Push_main.pdf}
		\end{minipage}%
		\begin{minipage}[htbp]{0.33\linewidth}
			\centering
			\includegraphics[width=1.0\linewidth]{figures/Slide_main.pdf}
		\end{minipage}%
		\begin{minipage}[htbp]{0.33\linewidth}
			\centering
			\includegraphics[width=1.0\linewidth]{figures/PAP_main.pdf}
		\end{minipage}%
\caption{The test success rate comparison on the FetchPush-v1, FetchSlide-v1 and FetchPickAndPlace-v1 among our proposed method (ESPD), HER and Evolution Strategy (ES).}
\label{main_results}
\end{figure*}

\subsection{Ablation Studies}
\paragraph{Exploration Factor}
\begin{figure*}[t]
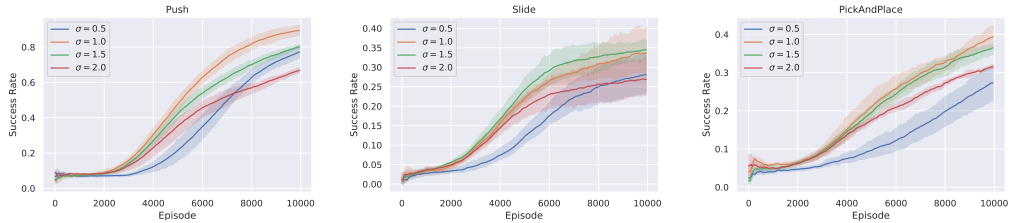

\begin{minipage}[htbp]{0.33\linewidth}
			\centering
			\includegraphics[width=1.0\linewidth]{figures/Push_factor.pdf}
		\end{minipage}%
		\begin{minipage}[htbp]{0.33\linewidth}
			\centering
			\includegraphics[width=1.0\linewidth]{figures/Slide_factor.pdf}
		\end{minipage}%
		\begin{minipage}[htbp]{0.33\linewidth}
			\centering
			\includegraphics[width=1.0\linewidth]{figures/PAP_factor.pdf}
		\end{minipage}%

\caption{The test success rate comparison on the FetchPush-v1, FetchSlide-v1 and FetchPickAndPlace-v1 with different scale of exploration factors. Experiments are repeated with 5 random seeds.}
\label{exp_factor}
\end{figure*}

The exploration factor $\sigma$ controls the randomness of behavior policy and therefore determines the behavior of generated samples. While larger $\sigma$ helps the agents to benefit exploration by generating samples with large variance, smaller $\sigma$ helps to generate a biased sample with little variance. Here we need to select a proper $\sigma$ to balance the variance and bias.
Fig.\ref{exp_factor} shows our ablation study on the selection of different exploration factors. The results are generated with 5 different random seeds. 
We find in all environments, the exploration factor $\sigma = 1$ provides sufficient exploration and relatively high learning efficiency.

\paragraph{Horizon $K$}
In our proposed method, the parameter of Horizon $K$ determines the maximal length of sample trajectories the policy can learn from. 
Intuitively, smaller $K$ decreases the learning efficiency as the policy is limited by its small horizon, making it hard to plan for the tasks that need more steps to solve. 
On the other hand, larger $K$ will provide a better concept of the local as well as global geometry of the state space, and thus the agent may learn to solve more challenging tasks. 
However, using large $K$ introduces more interactions with the environment, and needs more computation time.
Moreover, as the tasks normally do not need lots of steps to finish, when the Horizon is getting too large, more noisy actions will be collected and be considered as better solutions and hence impede the learning performance.
Fig.\ref{horizon} shows our ablation studies on the selection of Horizon $K$. The results are generated with 5 different random seeds. We find that $K = 8$ provides satisfying results in all of the three environments. 

\begin{figure*}[t]
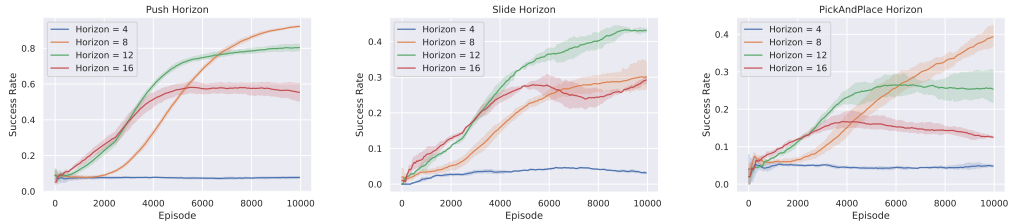

\begin{minipage}[htbp]{0.33\linewidth}
			\centering
			\includegraphics[width=1.0\linewidth]{figures/Push_horizon.pdf}
		\end{minipage}%
		\begin{minipage}[htbp]{0.33\linewidth}
			\centering
			\includegraphics[width=1.0\linewidth]{figures/Slide_horizon.pdf}
		\end{minipage}%
		\begin{minipage}[htbp]{0.33\linewidth}
			\centering
			\includegraphics[width=1.0\linewidth]{figures/PAP_horizon.pdf}
		\end{minipage}%
\caption{The test success rate comparison on the FetchPush-v1, FetchSlide-v1 and FetchPickAndPlace-v1 with different scales of Horizon $K$. The results are generated with 5 different random seeds}
\label{horizon}
\end{figure*}


\section{Related Work}

\paragraph{Learning with Experts and Policy Distillation}
Imitation Learning (IL) approaches introduce expert data in the learning of a
agent~\citep{pomerleau1991efficient,ross2011reduction}, while similar techniques
are used in the literature of Learning from Demonstrations
(LfD)~\citep{atkeson1997robot,schaal1997learning,argall2009survey}, where experience
of human expert will be collected to help the learning of an agent. Those methods are
further extended in the setting of Deep Q-learning~\citep{mnih2015human,hester2018deep},
combined with DDPG~\citep{lillicrap2015continuous,nair2018overcoming} or to learn
from imperfect expert data~\citep{gao2018reinforcement}.

Policy Distillation was proposed to extract the policy of a trained RL agent with
a smaller network to improve the efficiency as well as the final performance or
combine several task-specific agents together~\citep{rusu2015policy}. Latter extensions
proposed to improve the learning efficiency~\citep{schmitt2018kickstarting}, enhance
multi-task learning~\citep{teh2017distral,arora2018multi}. 

All of those methods start from a trained expert agent or human expert experience
that can solve a specific task~\citep{czarnecki2019distilling}. As a comparison, our proposed method focus on extracting knowledge from stochastic behaviors, which is capable to act as a feasible policy itself with regard to the primal task.

\paragraph{Evolution Strategies and Parameter Noise}
The Evolution Strategy (ES) was proposed by~\citet{salimans2017evolution} as an alternative
to standard RL approaches, where the prevailing temporal difference based value function
updates or policy gradient methods are replaced as perturbations on the parameter space
to resemble the evolution. Later on, \citet{campos2018importance} improved the efficiency
of ES by means of importance sampling. Besides, the method was also extended to be combined
with Novelty-Seeking to further improve the performance~\citep{conti2018improving}.

Thereafter, \citet{plappert2017parameter} proposed to use Parameter Noise as an alternative
to the action space noise injection for better exploration. They show such a perturbation
on the parameter space can be not only used for ES methods, but also collected to improve the sample efficiency by combining it with traditional RL methods. 

While previous ES algorithms apply perturbations on the parameter noise and keep the
best-performed variates, our approach implicitly execute the policy evolution by distilling
better behaviors, therefore our approach can be regarded as an Evolutiaon Strategy based on
action space perturbation.

\paragraph{Supervised and Self-Imitate Approaches in RL}
Recently, several works put forward to use supervised learning to improve the stability and
efficiency of RL. \citet{zhang2019policy} propose to utilize supervised learning to tackle the
overly large gradients problem in policy gradient methods. In order to improve sample efficiency,
the work chose to first design a target distribution proposal and then used supervised
learning to minimize the distance between present policy and the target policy distribution.
The Upside-Down RL proposed by \citet{schmidhuber2019reinforcement} used supervised
learning to mapping states and rewards into action distributions, and therefore acted as a normal
policy in RL. Their experiments show that the proposed UDRL method outperforms several
baseline methods~\citep{srivastava2019training}. In the work of~\citet{sun2019policy}, a curriculum learning
scheme is utilized to learn policies recursively.
The self-imitation idea relevant to ESPD is also discussed in the concurrent work of~\citet{ghosh2019learning}, but ESPD further uses the SELECT function to improve the quality of collected data for self-imitation learning.


\section{Conclusion}

In this work we focus on developing a practical algorithm that can evolve to solve the GCRS problems by distilling knowledge from a series of its stochastic variants. The key insight behind our proposed method is based on our SDE formulation of the GCRS tasks: such tasks can be solved by learning to reduce the FHT. Our experiments on the OpenAI Fetch Benchmarks show that the proposed method, Evolutionary Stochastic Policy Distillation, has high learning efficiency as well as stability with regard to two baseline methods, the Evolution Strategies and Hindsight Experience Replay.

%
%

\newpage


\newpage

\appendix
\section{On the Selection of Exploration Factor}
\begin{figure}[t]
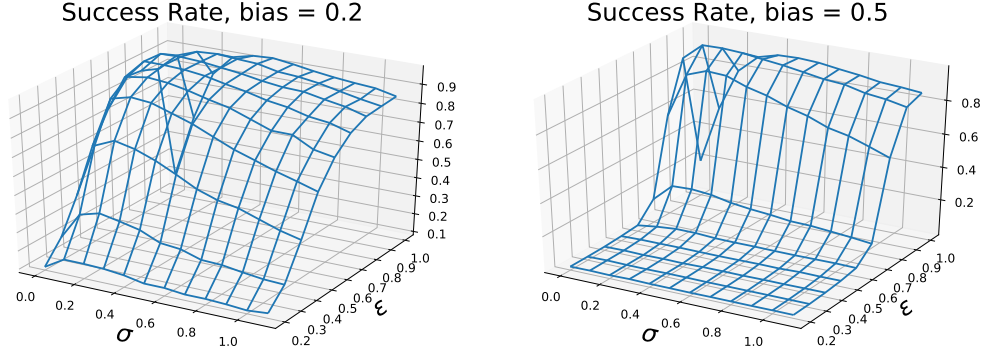

\begin{minipage}[htbp]{0.5\linewidth}
			\centering
			\includegraphics[width=1.0\linewidth]{figures/numerical_02.pdf}
		\end{minipage}%
		\begin{minipage}[htbp]{0.5\linewidth}
			\centering
			\includegraphics[width=1.0\linewidth]{figures/numerical_05.pdf}
		\end{minipage}%
\caption{The numerical result with different values of bias $b$ and goal awareness factor $\epsilon$.}
\label{fig_numerical}
\end{figure}

In Algorithm 1, the behavior policy $\pi_B$ is composed of the deterministic target policy
$\pi_T = \pi_{\theta}(s,g)$, and an exploration term $\mathcal{N}(0,\sigma^2)$. During the learning process,
$\pi_T$ provides a biased estimation of the feasible policy $\pi_{\theta}(s,g)$ without variance,
while the exploration term $\mathcal{N}(0,\sigma^2)$ provides unbiased exploration with variance.
The Eq.\ref{equation_4} becomes
\begin{equation}
\label{equation_biased_estimation}
    \pi_{B} = {\pi_{\theta}(s,g)} +  \tilde{\eta}
    = \epsilon \phi_s^{-1}\left(-\xi\frac{\partial \mathcal{D}(s,g)}{\partial s} + b\right) +  \tilde{\eta},  \tilde{\eta}\sim\mathcal{N}(0,\sigma^2),
\end{equation}
where $b$ is an unknown bias introduced by function approximation error or
extrapolation error~\citep{fujimoto2018off} due to the limited number of samples in Algorithm 1.

Intuitively, a large exploration factor, \ie large $\sigma$, will lead to better exploration
thus can help reduce the bias introduced by the $\pi_T$, while smaller $\sigma$ can
reduce the variance, but expose the bias. This is exactly the dilemma of Exploration-Exploitation
(E\&E)~\citep{sutton1998introduction}. We further introduce an effective annealing
method to adjust $\sigma$, the exploration factor to tackle the E\&E challenge.

In the following section, we provide analysis and numerical experiment result based on the special case we have mentioned above to interpret how the exploration factor helps to correct the bias.

\subsection{Revisit the Special Case: Navigation in the Euclidean Space}

At the beginning of learning, the policy $\pi_T$ is initialized randomly. The only way to cross the target region at this moment is to utilize large exploration term, \ie
with large $\sigma$. As the learning continues with limited experience, bias might be introduced into Eq.\ref{equation_norm_ou}
\begin{equation}
    \pi(s,g) = ds = \epsilon \left(\frac{g-s}{||g-s||_2} + b\right)\mathrm{d}t + \sigma \mathrm{d}W_t,
    \label{equation_8}
\end{equation}
where $b$ denotes the bias. One the one hand, such bias may lead to extremely bad solutions if we do not keep a exploration term for bias correction~\citep{fujimoto2018off}. On the other hand, while the policy becomes more capable of navigating in the state space to reach the goal, large exploration term will hinder the agent to step into the goal region. Here we conduct a numerical simulation to show the dependencies of Success Rate, \ie the proportion that successfully hit the goal region in a 2-D Euclidean space
navigation task, on the value of $\sigma$.

\subsection{Numerical Result} According to the previous analysis, the exploration with a random behaved policy in a GCRS task is like a random walk in the state space.
Distinguished from the well known fact that a drunk man will find his way home, but a drunk bird may get lost forever~\citep{kakutani1944143,bracewell1986fourier}, in most cases, the systems we are concerned about have finite boundaries, and the goal, instead of a single point in the state space, always has a non-trivial radius.
Therefore, in known dynamics, we can simulate the behavior of policy at different learning state, e.g., with different bias $b$, goal-awareness $\epsilon$, and investigate how the exploration factor affects the learning outcomes.

Our simulation is based on a bounded region of size $N\times N$, for each episode, a current state and goal are generated randomly in the region. At each timestep, the state updates according to Eq.\ref{equation_8}, with normalized step length. The success rate shows the probability of hitting the goal within a finite time horizon $T$. In our simulation, we apply tabular fixed random bias with different scale (\ie, $U(0,0.2)$ and $U(0,0.5)$), and set $T = 100$, $N = 100$, maximal step length $10$ and goal radius $r = 1$.

The result is shown in Fig.\ref{fig_numerical}. Smaller bias enables success rate increases when the goal-awareness is small. As goal awareness increase, the performance of success rate relies on the selection of exploration factor. For small exploration factors, the performance of biased policy will drastically be hindered, while proper exploration factor value will fix such a problem. Such imperfectness, e.g., the bias is unavoidable when parameterizing the policy with a neural network, hence we maintain a small exploration factor even when evaluating a policy for bias correction. The detailed comparison with different exploration factor in both training and testing phase is discussed in the experiment section.

\subsection{Evaluation Noise}

\begin{figure*}[t]
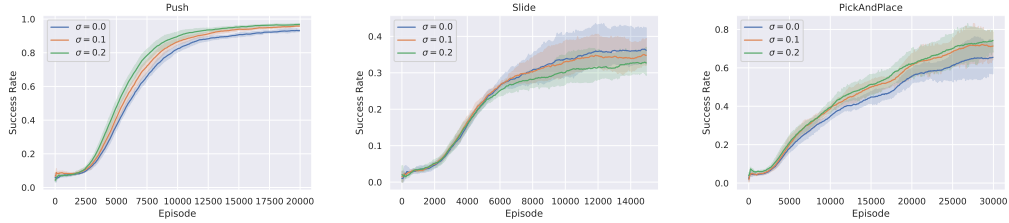

\begin{minipage}[htbp]{0.33\linewidth}
\centering
\includegraphics[width=1.0\linewidth]{figures/Push_Eval_factor.pdf}
\end{minipage}%
\begin{minipage}[htbp]{0.33\linewidth}
\centering
\includegraphics[width=1.0\linewidth]{figures/Slide_Eval_factor.pdf}
\end{minipage}%
\begin{minipage}[htbp]{0.33\linewidth}
\centering
\includegraphics[width=1.0\linewidth]{figures/PAP_Eval_factor.pdf}
\end{minipage}%
\caption{The test success rate comparison on the FetchPush-v1, FetchSlide-v1 and FetchPickAndPlace-v1 with different scale of noise applied in policy evaluation. The results are generated with 5 different random seeds}
\label{eval_exp_factor}
\end{figure*}

As we have shown in the numerical simulation, the bias of learned deterministic policy reduces the success rate. Such bias can be attributed to the extrapolation error~\citep{fujimoto2018off}.
Consequently, we introduce a Gaussian noise term in the learned policy to form a stochastic policy for robustness. Our ablation studies on the selection of different scales of such noise terms are shown in Fig.\ref{eval_exp_factor}. The results are generated with 5 different random seeds, showing proper noise terms can help to overcome the extrapolation error and therefore improve the evaluation performance.
It worths noting that applying larger noise in the game of FetchSlide will lead to performance decay, as the game relies on precise manipulation: after the robotic arm hitting the block, the block will become out of reach, and therefore the agent can not correct the error anymore.

\end{document}